\documentclass{article}

\PassOptionsToPackage{numbers, compress}{natbib}

\usepackage[preprint]{neurips_2019}

\usepackage[utf8]{inputenc} 
\usepackage[T1]{fontenc}    
\usepackage{hyperref}       
\usepackage{url}            
\usepackage{booktabs}       
\usepackage{amsfonts}       
\usepackage{nicefrac}       
\usepackage{microtype}      
\usepackage{graphicx}
\usepackage{graphics}
\usepackage{subfig}
\usepackage{amsmath}
\usepackage{amsthm}
\usepackage{amssymb}
\usepackage{xcolor}
\usepackage{enumitem}
\usepackage{tabu}
\usepackage[compact]{titlesec}    
\titlespacing{\section}{0pt}{*0}{*0}   
\titlespacing{\subsection}{0pt}{*0}{*0}
\titlespacing{\subsubsection}{0pt}{*0}{*0}   
\captionsetup{font=footnotesize}

\title{Quantifying Error in the Presence of Confounders for Causal Inference}
\author{Rathin Desai \\
Microsoft Research India \\
Bangalore, Karnataka \\
\texttt{desairathin18@gmail.com}
\And
Amit Sharma \\
Microsoft Research India \\
Bangalore, Karnataka \\ 
\texttt{amshar@microsoft.com}}

\newtheorem{theorem}{Theorem}[section]
\newtheorem{corollary}{Corollary}[theorem]
\newtheorem{lemma}[theorem]{Lemma}
\theoremstyle{definition}
\newtheorem{defn}[theorem]{Definition}
\theoremstyle{note}
\newtheorem{note}[theorem]{Note}

\begin{document}
\maketitle

\begin{abstract}
    Estimating average causal effect (ACE) is useful whenever we want to know the effect of an intervention on a given outcome.  In the absence of a randomized experiment, many methods such as stratification and inverse propensity weighting have been proposed to estimate ACE. However, it is hard to know which method is optimal for a given dataset or which hyperparameters to use for a chosen method. 
 To this end, we provide a framework to characterize the loss of a causal inference method against the true ACE,  by framing causal inference as a representation learning problem. 
  We show that many popular methods, including back-door methods can be considered as weighting or representation learning  algorithms, and provide general error bounds for their causal estimates. In addition, we consider the case when unobserved variables can confound the causal estimate and extend proposed bounds using  principles of robust statistics, considering confounding as contamination under the Huber contamination model.  These bounds are also estimable; as an example, we provide empirical bounds for the Inverse Propensity Weighting (IPW) estimator and show how the bounds can be used to optimize the threshold of clipping extreme propensity scores. Our work provides a new way to reason about competing estimators,  
  and opens up the potential of deriving new methods by minimizing the proposed error bounds.    
\end{abstract} 

\section{Introduction}
\label{introduction}
Consider the canonical causal inference problem where the goal is to find the effect of a treatment $T$ on some outcome $Y$, as shown in the structural causal model in  Figure~\ref{fig:source}. This is relevant for estimating the effect of any fixed intervention, such as setting a system parameter, a medical intervention~\cite{hernanbook} or a policy in social science settings~\cite{morgan2015counterfactuals}.  Here $W$ and $U$ are observed and unobserved common causes respectively, which affect the observed conditional distribution $\Pr(Y|T)$. To estimate the causal effect of $T$, methods typically condition on the observed common causes $W$ using the ``back-door'' formula \cite{pearl2009book}, including methods such as stratification \cite{lunceford2004stratification}, matching \cite{rubin1996matching}, and inverse weighting \cite{rosenbaum1983central}. All of these methods work under the ``ignorability" or the ``selection on observables" assumption, where $U$ is assumed to have no effect once we condition on $W$ (i.e. $\Pr(Y|T,W) = \Pr(Y|T,W,U)$). 
In practice, however, ignorability is seldom satisfied and its violation can lead to significant errors, even changing the direction of the effect estimate. Because $U$ is unobserved, current methods provide no bounds on the error in a causal effect estimate when the assumption is violated. This makes it hard to compare methods for a given dataset, or to assess sensitivity of an estimate to unobserved confounding, except by simplistic simulations of the effect of $U$ \cite{rosenbaum2002observational}. 

In this paper, we provide a general framework for estimating error for causal inference methods, both in the presence and absence of $U$. Our insight is that the causal inference problem can be framed as a domain adaptation~\cite{mansour2009domain} problem, where the target distribution is generated from a (hypothetical) randomized experiment on $T$, as shown in Figure~\ref{fig:target1}.  
Under this target distribution $P$,  the observed effect $P(Y|T)$ is the same as the causal effect, $P(Y|do(T))$ since $T$ is no longer affected by $W$ or $U$~\cite{pearl2009book}. The goal of causal inference  then is to use data from a source distribution $Q$ and estimate a function that approximates $P(Y|T)$. Alternatively, one can consider this as a task of learning an intermediate distribution R (or a \emph{representation}), such that $R(Y|T)$ will be as close as possible to $P(Y|T)$.  In this paper, using the lens of domain adaptation~\cite{mansour2009domain},   we provide bounds on the error of such estimators for the average causal effect (ACE), based on distance (bias) of the intermediate distribution $R$ from $P$ and variance in estimating it. In particular, we show that many causal inference methods such as stratification and inverse propensity weighting (IPW) can be considered as learning an intermediate representation. 

When $U$ is ignorable, we provide bounds that separate out the effects of bias and variance in choosing $R$ and derive a procedure to estimate them from data. Empirical simulations show the value of the proposed error bound in evaluating different intermediate representations, and correspondingly, causal inference algorithms. For instance, our bound can be used to select the optimal threshold for clipping extreme probabilities---a common technique in weighting algorithms such as IPW---in order to minimize error.   When $U$ is not ignorable, we utilize theory from robust estimators to characterize $U$'s effect on $Y$. The intuition is that confounding effect of $U$ on $Y$ can be considerd as contamination (noise) added to true function between $T$ and $Y$. In addition, we assume that this noise affects only a fraction of input data rows. Such an assumption is plausible whenever effect of $U$ is specific to certain units, for example, unobserved genes may only affect some people's health outcome and be ignorable for other people.  We use the Huber-contamination model~\cite{huber1992robust} to model this noise, provide a robust estimator for the causal effect~\cite{lai2016agnostic}, and bound its error under the assumption that $U$ only affects a fraction of all outcomes $Y$. When such an assumption is not plausible, the bounds still allow us to study the sensitivity of the error as the amount of contamination (confounding) by $U$ is changed. Overall, our error bounds on causal estimators provides a principled way to compare different estimators and conduct sensitivity analysis of causal estimates with minimal parametric assumptions. 

\begin{figure}[tb]
\centerline
{
    \subfloat[Source distribution Q]
{
    \includegraphics[width=0.25\linewidth]{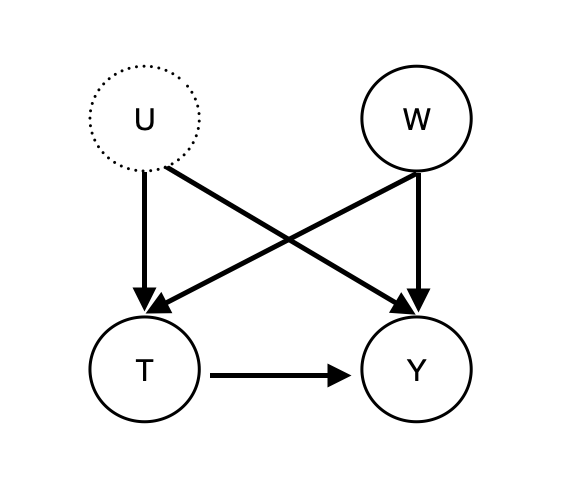}
    \label{fig:source}
}
   \subfloat[Target Distribution P]
{
    \includegraphics[width=0.25\linewidth]{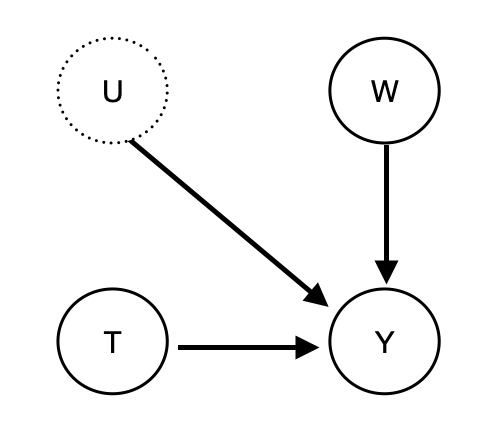}
    \label{fig:target1}
}
    \subfloat[Source Distribution]
{
    \includegraphics[width=0.25\linewidth]{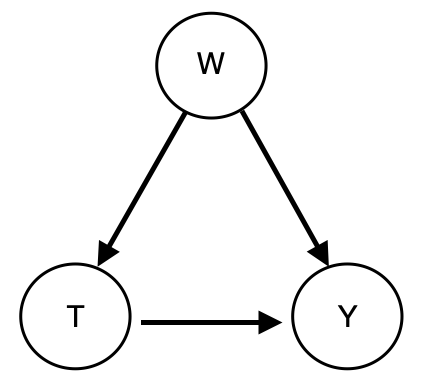}
    \label{fig:source-simple}
}
\subfloat[Target Distribution]
{
    \includegraphics[width=0.25\linewidth]{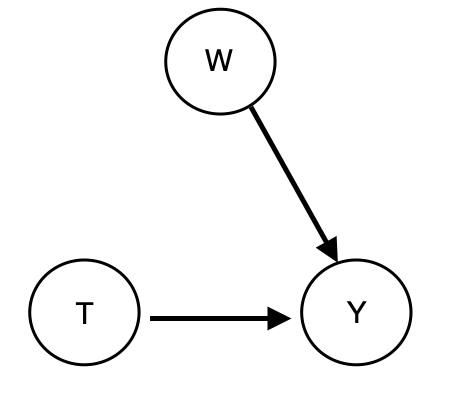}
    \label{fig:target-simple}
}
}
\caption{Causal graphical models denoting source and target distributions in the presence and absence of unobserved confounders $U$.}
\end{figure}

\section{Background \& Contributions}

\subsection{Defining predictive and causal effect}
\label{pred-causal-effect}
We first define the average causal effect (ACE) and show its connection to the average predictive effect. Let $V=\{W,T,Y\}$ be the set of observed variables. $T$ represents the treatment variable and  $Y$ the outcome variable. $W$ represents the set of all observed common causes of $T$ and $Y$, and $U$ denotes the set of all \emph{unobserved} common causes of $T$ and $Y$. 
Throughout, we assume that the treatment is binary, $T\in \{0,1\}$, where $T=1$ denotes that a treatment was assigned and otherwise for $T=0$. $Y$ and $W$ can be discrete or continuous. $U$ are unobserved common causes and we make no assumptions about them. Figure~\ref{fig:source} shows this  observed data distribution as the \emph{source distribution} $Q$, using the structural causal graph~\cite{pearl2009book} notation. Vertices represent variables and edges represent the potential causal link between these variables.

Under the source distribution $Q$, we define the average predictive effect (APE) of $T$ on $Y$ as:
\begin{equation}\label{eq:APE}
    APE_Q = E_Q[Y|T=1] - E_Q[Y|T=0]
\end{equation}
Intuitively, APE captures the correlation between $T$ and $Y$. In general, correlation is not a sufficient condition to imply that the treatment had actually caused the observed outcome. This is because Reichenbach's common cause principle states that if two random variables $T$ and $Y$ are statistically dependent $(T\not\!\perp\!\!\!\perp Y)$, then there exists a third variable, say $W$ that can causally influence both. 
Thus, using the \emph{do-operator}~\cite{pearl2009book}, we can write the average causal effect of T on Y as, 
\begin{align}\label{eq:ACE}
    ACE &\triangleq E[Y|do(T=1)] - E[Y|do(T=0)]
\end{align}
where $do(T=1)$ operator denotes setting the value of $T=1$ independent of all ancestors of $T$ in the causal graph.  A randomized experiment where one randomizes $T$ and then observes effect on $Y$ is  one way for estimating the ACE. Due to randomization, any effect of W or U on T is wiped out and thus the method is considered as a ``gold standard'' for ACE. Effectively, randomization constructs a new distribution $P$ where there are no back-door paths that confound effect of T on Y and thus average predictive effect equals ACE (formally, due to Rule-2 of do-calculus~\cite{pearl2009book}).  We call this the \textit{target distribution} $P$ and write:
\begin{equation}\label{eq:APE}
    APE_P = E_P[Y|T=1] - E_P[Y|T=0] = E[Y|do(T=1)] - E[Y|do(T=0)]= ACE
\end{equation}

\subsection{Causal inference methods}
Without a randomized experiment, however, the ACE cannot be identified from observational data from $Q$. Methods for causal inference typically make the ignorability assumption, implying that U does not any additional effect after conditioning on the effect of W. That is, $\Pr(Y|T,W)=\Pr(Y|T,W,U)$. In graphical language, conditioning on W \emph{``d-separates''} $T$ and $Y$ in a modified graph that has no outgoing edges from $T$. Under this assumption, various methods have been proposed using the ideas of \emph{conditioning} or \emph{weighting}; for a review see \cite{hernanbook,rosenbaum2002observational,rubin1996matching}.

In conditioning-based methods, we separate data into strata based on $W$, estimate the predictive effect in each stratum which is equal to the causal effect,  and then use the back-door formula~\cite{pearl2009book} to aggregate the estimate. This method is called stratification (\emph{matching} when each stratum is of size 1).
\begin{align} 
    Q(Y\vert W, T=1) = P(Y\vert W, T=1) \Rightarrow E(Y\vert do(T=1))=\sum_W E_Q(Y\vert W, T=1)Q(W) 
    \label{eqn:strat}
\end{align}
Alternatively, in weighting-based methods, one can weight samples from the source data Q to resemble a sample from P. In other words, we ensure that the treatment assignment probability $Q(T=1|W)$ matches the target distribution $P$ as far as possible. This is achieved using importance sampling, or a common variant called inverse propensity weighting where each sample point's weight is inversely proportional to its probability of occurence in the data. This weighting gives more weight to samples that do not occur frequently due to effect from $W$, 
thus compensating for selection bias in $Q(T=1)$. Assuming $n$ is the number of samples from Q, we write:
\begin{equation}
\label{eq:IPW}
    \hat{IPW}=\dfrac{1}{n}\bigg(\sum\limits_{i=1}^n\dfrac{T_i*Y_i}{Q(T_i=1\vert W_i)}-\sum\limits_{i=1}^n\dfrac{(1-T_i)*Y_i}{1-Q(T_i=1\vert W_i)}\bigg)
\end{equation}

\subsection{Our contributions}
We make the following contributions:
\begin{itemize} [itemsep=0pt,leftmargin=*,topsep=2pt]
    \item Using the relationship between APE and ACE, we formulate causal inference as the problem of learning a representation R such that  $APE_R$ approximates $APE_P$. Specifically, we use a probability weighting method to construct a representation R, and show that popular methods such as stratification and IPW are special cases of the weighting method. (Section~\ref{sec:causal-rep})
    \item We provide bounds for the loss in estimating ACE as $APE_R$ and separate out the loss incurred due to bias and variance in selecting $R$. We apply these bounds to develop a data-driven method for selecting the clipping threshold of an IPW estimator. (Section~\ref{sec:bounds}) 
    \item When unobserved confounders $U$ may be present, we extend these bounds using recent work in robust estimation and provide the first results that can characterize error in the presence of unobserved confounding. (Section~\ref{sec:robust}) 
\end{itemize}

\section{Causal Inference as Representation Learning}
\label{sec:causal-rep}
As discussed above, the problem of estimating ACE can be considered as learning the target distribution $P$ given data from $Q$ and then estimating the observed conditional expectation $E_P[Y|T]$. $Q$ can be considered as the factual distribution, and $P$ the counterfactual distribution corresponding to the question---\textit{what would have happened if we intervened on $T$ without changing anything else?} Our goal is to learn an intermediate distribution $R$ that approximates $P$. This setup is similar to domain adaptation, except that  instead of learning a function $f$ as in Mansour et al.~\cite{mansour2009domain},  we learn a new representation of the data and estimate the same APE function.

\subsection{Defining the weighting method}\label{problemsetup}
Given this formulation, a key question is how to generate a representation such that its APE will be close to ACE. We first define  a consistent estimator for $APE$ under any distribution $R$, $h(x_R)$. Then, the estimator ($h$), and the  APE under infinite samples ($h^\infty$) can be written as:
\begin{align}
  h(x_R)=\hat{E}_R[Y\vert T=1]-\hat{E}_R[Y\vert T=0];  &&   h^\infty(x_R)=E_R[Y\vert T=1]-E_R[Y\vert T=0]  
\end{align}\label{eq:hdef}
By the above definition of $h$, $ACE=h^\infty(x_P)$. Next, we define a class of distributions given by weighting of $Q$. Following Johansson et al.~\citep{johansson2018learning}, we generate a weighted representation $R$ from our source distribution $Q$ such that $h(x_R)$ is an estimator for ACE. 
\begin{defn}\label{def:beta}
    Let $Q(W,T,Y)$ be the source distribution. We define a weighting function $\beta(W,T)$ to generate a representation $R$ such that,
    $$\beta(W,T) = \frac{R(W\vert T)}{Q(W\vert T)}= \frac{R(T\vert W)}{Q(T\vert W)}$$ and that $R$ is a valid probability distribution, $\forall W,T\hspace{1ex}R(W,T)\geq 0$, $\sum_W \sum _T R(W,T)=1$. 
\end{defn}

\subsection{IPW and stratification as weighting methods}
We now show that the IPW estimator and back-door methods such as stratification can be considered as a weighted $\beta$ estimator.
\begin{theorem}
\label{thm:IPW}
Consider the causal graphical model in Figure~\ref{fig:source-simple} where the observed common causes $W$ are the only confounders. The IPW estimator can be written as a representation $R$ where $\beta(W,T=t)=\dfrac{R(W|T=t)}{Q(W\vert T=t)}=\dfrac{Q(W)}{Q(W\vert T=t)}$.
\end{theorem}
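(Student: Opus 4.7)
The plan is to unpack Definition~\ref{def:beta} and verify two things: first, that the candidate $\beta(W,T=t) = Q(W)/Q(W|T=t)$ induces a legitimate distribution $R$; second, that the sample version of $h(x_R)$ under this $R$ literally equals the expression in equation~(\ref{eq:IPW}).

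First I would check validity. Multiplying, $R(W|T=t) = \beta(W,T=t)Q(W|T=t) = Q(W)$, which is manifestly a probability distribution over $W$ and is independent of $T$. This is exactly the marginal structure one expects of the target distribution in Figure~\ref{fig:source-simple} (the arrow from $W$ to $T$ severed). Since $R(T)$ can be taken as any probability, in particular $Q(T)$, the joint $R(W,T) = Q(W)R(T)$ is nonnegative and sums to one, so $\beta$ meets the requirements of Definition~\ref{def:beta}.

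Next I would compute $h^\infty(x_R)$ and then its sample version. Under the standard convention that reweighting alters the covariate distribution but not the conditional outcome mechanism, $R(Y|W,T) = Q(Y|W,T)$. Then, by importance reweighting,
\begin{equation*}
E_R[Y|T=t] \;=\; \sum_w E_Q[Y|W=w, T=t]\,R(W=w|T=t) \;=\; E_Q\!\left[\beta(W,T=t)\,Y \,\middle|\, T=t\right].
\end{equation*}
Applying Bayes' rule rewrites the weight purely in terms of the propensity score: $\beta(W,T=t) = Q(W)/Q(W|T=t) = Q(T=t)/Q(T=t|W)$. Substituting this gives $E_R[Y|T=t] = Q(T=t)\,E_Q[Y/Q(T=t|W)\mid T=t]$.

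Finally I would replace the population quantities with their natural plug-in estimators. With $n_t$ denoting the number of units having $T_i=t$, the conditional expectation is estimated by $(1/n_t)\sum_{i:T_i=t} Y_i/Q(T_i=t|W_i)$, and $Q(T=t)$ by $n_t/n$, so the factors of $n_t$ cancel and
\begin{equation*}
\hat{E}_R[Y|T=t] \;=\; \frac{1}{n}\sum_{i:\,T_i=t}\frac{Y_i}{Q(T_i=t|W_i)}.
\end{equation*}
Taking $t=1$ and $t=0$ and subtracting reproduces equation~(\ref{eq:IPW}) once $Q(T_i=0|W_i) = 1 - Q(T_i=1|W_i)$ is used, so $h(x_R) = \hat{IPW}$. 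The only subtle step is the implicit assumption $R(Y|W,T)=Q(Y|W,T)$, which I would flag explicitly since Definition~\ref{def:beta} only constrains the joint over $(W,T)$; everything else is algebra.
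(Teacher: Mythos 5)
Your proof is correct, and it reaches the same identity as the paper but travels it in the opposite direction with somewhat more care at two points. The paper starts from the IPW formula in Equation~\ref{eq:IPW}, passes to its population expectation, reduces it to the back-door form $\sum_W\sum_Y Y\,Q(Y\vert T=1,W)Q(W)$, separately expands $APE^R_{T=1}$ as $\sum_W\sum_Y Y\,Q(Y\vert T=1,W)\beta(W,T=1)Q(W\vert T=1)$, and reads off $\beta$ by matching the two expressions. You instead start from the candidate $\beta$, observe that it forces $R(W\vert T=t)=Q(W)$, convert the weight to propensity form via $Q(W)/Q(W\vert T=t)=Q(T=t)/Q(T=t\vert W)$, and then check that the plug-in estimator of $E_R[Y\vert T=t]$ literally reproduces the sample formula in Equation~\ref{eq:IPW}. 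Two things your version adds: (i) you verify that the proposed $\beta$ actually yields a valid probability distribution $R$, which Definition~\ref{def:beta} demands but the paper's proof never checks; and (ii) you handle the finite-sample normalization explicitly (the $n_t/n$ cancellation), whereas the paper silently identifies the sample average $\frac{1}{n}\sum_i T_iY_i/Q(T_i\vert W_i)$ with its population expectation in its second equality, which is really a statement about the estimator's expectation rather than the estimator itself. Both arguments hinge on the same non-trivial assumption $R(Y\vert W,T)=Q(Y\vert W,T)$ -- the paper invokes it as the ignorability/causal-invariance condition of Note~\ref{Causal Invariance}, and you correctly flag that Definition~\ref{def:beta} alone does not supply it. The one cosmetic difference is that the paper treats only the $T=1$ term and appeals to symmetry, while you carry both arms through to the subtraction; nothing substantive turns on this.
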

\begin{proof}
    Here we consider only the $T=1$ part of IPW estimator from the RHS of Equation~\ref{eq:IPW}. The proof is symmetric for $T=0$.
\begin{align}
    IPW_{T=1}&=\dfrac{1}{n}\bigg(\sum\limits_{i=1}^n\dfrac{T_i*Y_i}{Q(T_i\vert W_i)}\bigg)
    =E_Q\bigg[\dfrac{1_{T=1}Y}{Q(T=1\vert W)}\bigg]
    =\sum_W \dfrac{E_Q[Y \vert T=1,W]Q(T=1\vert W)Q(W)}{Q(T=1\vert W)}\\
    &=\sum_W\sum_Y\dfrac{YQ(Y\vert T=1,W)Q(T=1\vert W)Q(W)}{Q(T=1\vert W)}
    =\sum_W\sum_Y YQ(Y\vert T=1,W)Q(W) \label{eqn:backdoor-ipw}
\end{align}
where $1_{T=1}$ is an indicator function that is 1 whenever T is 1 and 0 otherwise. The second equality above utilized that $T$ is binary.

Similarly, we can write the the first part $(T=1)$  of the $APE$ under $R$ as:
\begin{align*}
    APE_{T=1}^R=\sum_Y YR(Y\vert T=1)
    =\sum_W\sum_Y YR(Y\vert T=1,W)R(W\vert T=1)
    =\sum_W\sum_Y YQ(Y\vert T=1,W)R(W\vert T=1)
\end{align*}

where the last equality is since $Q(Y\vert T=1,W)=R(Y \vert T=1,W)$ (\emph{ignorability} assumption from Equation~\ref{eqn:strat}). Further, using $\beta(W,T) Q(W\vert T=1)=R(W\vert T=1)$ (by definition),
\begin{align*}
    APE_{T=1}^R=
    =\sum_W\sum_Y YQ(Y\vert T=1,W)\beta(W,T=1)Q(W\vert T=1)
\end{align*}
Comparing the two terms for $IPW_{T=1}$ and $APE_{T=1}^R$, if     $\beta(W,T=1)=\dfrac{Q(W)}{Q(W\vert T=1)}$, then $IPW_{T=1}=APE_{T=1}^R$.
\end{proof}

The  above proof also shows the equivalence of IPW and backdoor-based stratification~\cite{hernanbook}. Under the conditions of Theorem~\ref{thm:IPW}, and using $\sum_W E_Q(Y\vert W, T=1)Q(W)=\sum_W\sum_Y YQ(Y\vert T=1,W)Q(W)$, we have: 
\begin{corollary}
    The stratification estimator from Equation~\ref{eqn:strat}, $\sum_W E_Q(Y\vert W, T=1)Q(W)$ 
    is equivalent to Equation~\ref{eqn:backdoor-ipw} and thus also a weighting method with $\beta(W,T=1)=\dfrac{Q(W)}{Q(W\vert T=1)}$.
\end{corollary}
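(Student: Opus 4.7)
The plan is short: unfold the conditional expectation inside the stratification formula and observe that it coincides termwise with the last line of Equation~\ref{eqn:backdoor-ipw}, and then quote Theorem~\ref{thm:IPW} to read off the corresponding $\beta$. Concretely, first I would write $E_Q(Y\vert W, T=1) = \sum_Y Y \cdot Q(Y\vert W, T=1)$ by definition of conditional expectation (for continuous $Y$ the sum is replaced by an integral, but nothing else changes). Substituting this into the stratification estimator gives
\begin{equation*}
\sum_W E_Q(Y\vert W, T=1)\, Q(W) \;=\; \sum_W \sum_Y Y\, Q(Y\vert T=1, W)\, Q(W).
\end{equation*}

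Next I would compare this with the final line of Equation~\ref{eqn:backdoor-ipw}, which established that $IPW_{T=1} = \sum_W \sum_Y Y\, Q(Y\vert T=1, W)\, Q(W)$. The two expressions are literally the same, establishing the first claim of the corollary, namely equivalence of stratification and IPW under the assumptions of Theorem~\ref{thm:IPW}. For the second claim, Theorem~\ref{thm:IPW} already exhibits $IPW_{T=1}$ as $APE_{T=1}^R$ for the representation $R$ induced by $\beta(W, T=1) = Q(W)/Q(W\vert T=1)$; transitivity of equality then yields the same identification for the stratification estimator. The $T=0$ component is handled by the symmetric argument already noted in the proof of Theorem~\ref{thm:IPW}.

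The hard part will be essentially nonexistent: no inequality, no concentration, and no new assumption is required beyond \emph{ignorability}, which was already invoked when expressing $APE_{T=1}^R$ in Theorem~\ref{thm:IPW}. The only mildly delicate point to flag is the implicit assumption of positivity (or overlap), i.e.\ $Q(W\vert T=1) > 0$ wherever $Q(W) > 0$, so that $\beta(W, T=1)$ is well defined; this is the same regularity condition already needed for IPW and stratification to be valid and is inherited from Theorem~\ref{thm:IPW} rather than being a new requirement.
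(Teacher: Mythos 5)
Your proposal is correct and matches the paper's own (very brief) argument: the paper likewise justifies the corollary by noting that $\sum_W E_Q(Y\vert W,T=1)Q(W)=\sum_W\sum_Y Y\,Q(Y\vert T=1,W)\,Q(W)$, which is exactly the final expression in Equation~\ref{eqn:backdoor-ipw}, and then inherits $\beta(W,T=1)=Q(W)/Q(W\vert T=1)$ from Theorem~\ref{thm:IPW}. Your added remark on positivity of $Q(W\vert T=1)$ is a reasonable clarification but does not change the route.
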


\section{Bounds for ACE without unobserved confounders}\label{sec:bounds}
\label{sec:general-bounds}
Let us first consider a setting where the latent confounder $U$ has no effect on $T$ or $Y$. That is,  the treatment $T$ is assigned to a unit according to only observed covariates $W$ (shown in Figure~\ref{fig:source-simple}). 

Based on this assumption, we showed that a causal inference method can be characterized by a weighted distribution $R$ that it outputs. We now provide error bounds based on a given distribution $R$. 
We use a setup similar to that of Mansour et al.~\cite{mansour2009domain}, where the loss function $L$ is assumed to be  symmetric and that it follows the triangle inequality. Common loss functions such as the $L1$ and $L2$ loss satisfy these properties. We are interested in the loss between an estimated effect $h(x_R)$ and the ACE, $h^\infty(x_P)$.   
If the loss function is assumed to be $L1$, the loss can be defined as: 
$L({h}(x_R),h^\infty(x_P))\triangleq \vert {h}(x_R)-h^\infty(x_P) \vert$

\subsection{Loss Bound: A tradeoff between bias and variance}
\label{bounds}
Before we state the loss bounds, we define two terms that characterize the loss. Intuitively, if $R$ is chosen to be similar to Q ($\beta \approx 1$), then $h(x_R)$ will have low sample variance as the weights will be bounded, but high bias since $h^\infty(x_R)$ may be very different from the ACE, $h^\infty(x_P)$.  Conversely, if we choose $R$ to be close to $P$, then $h(x_R)$ will have low bias error, but possibly high variance as the $\beta$ weights can be high. Thus, for any $R$, the error is a combination of these factors: bias in choosing $R$, and the variance in estimating $h(x_R)$.     

To capture the error due to bias, we define a weighted L1 distance between $R$ and $P$.

\begin{defn}(Weighted L1 Distance)
    Assume $R$ and $P$ are distributions over $W,T,Y$. We define the weighted L1 distance(WLD), between $R,P$ as follows:
    \begin{equation}\label{eq:WLD}
        WLD_{T=t}(R,P)=\sum_W (R(W\vert T=t)-P(W\vert T=t)) E_{Q}[Y\vert T=t,W]
    \end{equation}
\end{defn}
We also define a $\textit{VR}$ term due to variance  in estimation.$\textit{VR}_{T=t}= \alpha_{T=t}(\hat{Q}, \hat{\beta}) - \alpha_{T=t}(Q,\beta)$.
\begin{defn}\label{def:alpha}(Sample Error Terms)
Define 
\begin{equation}
    \alpha_{T=t}(\hat{Q},\hat{\beta}) \triangleq \sum_W \hat{\beta}\hat{Q}(W\vert T=t)\sum_Y Y\hat{R}(Y\vert T=t,W)
\end{equation}
    
Using the same notation, population $\alpha$ is defined as
\begin{equation}
    \alpha_{T=t}(Q,\beta) \triangleq \sum_W \beta Q(W\vert T=t)\sum_Y YR(Y\vert T=t,W)    
\end{equation}

\end{defn}
\begin{note}\label{Causal Invariance}
The causal mechanism does not change across the distributions $P,Q,R$, which means, $P(Y\vert T,W)=Q(Y\vert T,W)=R(Y \vert T,W)$
\end{note}
For ease of exposition, we'll assume the loss function is L1. We have the following result. 
\begin{theorem}
Assume that the loss function $L$ is symmetric and obeys the triangle inequality. $h$ is a function on a representation $R$ such that $h(x_R)=E_R[Y\vert T=1]-E_R[Y\vert T=0]$. Then, for any valid weighted representation $R$, if there are no unobserved confounders and and $L=L1$, then:
\begin{align*}
    L(h(x_R),h^\infty(x_P))\leq & \vert\alpha_{T=1}(\hat{Q},\hat{\beta})- \alpha_{T=1}(Q,\beta)\vert+\vert\alpha_{T=0}(\hat{Q},\hat{\beta})-\alpha_{T=0}(Q,\beta)\vert \\
     & +\vert WLD_{T=1}(\beta Q,P) \vert + \vert WLD_{T=0}(\beta Q,P)\vert
\end{align*}

\end{theorem}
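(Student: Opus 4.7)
The plan is to split the target loss into a variance part and a bias part via the triangle inequality, with the intermediate anchor $h^\infty(x_R) = E_R[Y\vert T=1] - E_R[Y\vert T=0]$. Since $L=L_1$ is symmetric and obeys the triangle inequality, we get
\[
L(h(x_R),h^\infty(x_P)) \le L(h(x_R),h^\infty(x_R)) + L(h^\infty(x_R),h^\infty(x_P)).
\]
A second application of the triangle inequality within each $h$-difference separates the $T{=}1$ and $T{=}0$ terms, giving four summands: two of the form $|\hat E_R[Y\vert T{=}t] - E_R[Y\vert T{=}t]|$ (variance) and two of the form $|E_R[Y\vert T{=}t] - E_P[Y\vert T{=}t]|$ (bias).

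Next I would show the variance summands equal the $\alpha$-differences in the claim. By Definition~\ref{def:beta}, $\beta(W,T{=}t)Q(W\vert T{=}t) = R(W\vert T{=}t)$, so
\[
\alpha_{T=t}(Q,\beta) = \sum_W R(W\vert T{=}t)\sum_Y Y\,R(Y\vert T{=}t,W) = E_R[Y\vert T{=}t],
\]
and the identical calculation with empirical counterparts $\hat Q,\hat\beta,\hat R$ yields $\alpha_{T=t}(\hat Q,\hat\beta) = \hat E_R[Y\vert T{=}t]$. Hence the first two summands exactly match $|\alpha_{T=t}(\hat Q,\hat\beta)-\alpha_{T=t}(Q,\beta)|$ for $t\in\{0,1\}$.

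For the bias summands, I would invoke the causal invariance from Note~\ref{Causal Invariance}, which under ignorability ensures $R(Y\vert T,W) = Q(Y\vert T,W) = P(Y\vert T,W)$. Then
\[
E_R[Y\vert T{=}t] - E_P[Y\vert T{=}t] = \sum_W \bigl(R(W\vert T{=}t) - P(W\vert T{=}t)\bigr)\,E_Q[Y\vert T{=}t,W],
\]
which is exactly $WLD_{T=t}(R,P)$ by Equation~\ref{eq:WLD}. Replacing $R(W\vert T{=}t)$ with $\beta(W,T{=}t)Q(W\vert T{=}t)$ (Definition~\ref{def:beta}) rewrites this as $WLD_{T=t}(\beta Q, P)$, matching the remaining two terms of the bound. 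Summing the four bounds yields the claim.

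The main obstacle is not any single inequality but the bookkeeping: being careful that the identification $R(W\vert T{=}t) = \beta Q(W\vert T{=}t)$ lets $\alpha_{T=t}(Q,\beta)$ collapse to a pure $E_R$ expectation, and that the causal invariance note is what allows $E_R[Y\vert T,W]$ and $E_P[Y\vert T,W]$ both to be replaced by the common $E_Q[Y\vert T,W]$ so that a single WLD term captures the bias. Once those two identifications are made, the two triangle inequalities complete the proof.
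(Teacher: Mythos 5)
Your proposal is correct and follows essentially the same route as the paper's proof: the same triangle-inequality decomposition through the anchor $h^\infty(x_R)$, the same identification of the sampling terms with the $\alpha$-differences, and the same use of causal invariance to collapse the bias terms into the $WLD$ expressions. No gaps.
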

The proof is in Supplementary Materials. 

\subsection{Estimating the loss bound from observed data}
Given a causal inference algorithm (as defined by its weights $\beta$), we now describe how to estimate these bounds from data. 

\paragraph{Estimating VR term} For $\textit{VR}$ term, we use McDiarmid's inequality~\cite{raginsky2013concentration}. We can rewrite $\alpha_{T=t}$ as:
\begin{align}
 \sum_W \beta{Q}(W\vert T=t)\sum_Y Y{R}(Y\vert T=t,W)
= \sum_W \sum_Y \beta Y Q(Y,W|T=t) = \mathbb{E}_{Q(Y,W|T=t)}\beta Y
\end{align}
where we used that $R(Y\vert T=t, W)=Q(Y\vert T=t,W)$. Thus, $\textit{VR}_{T=t}$ can be written as an expected value. Then estimated $\hat{\alpha}_{T=t}$ can be written as $\frac{1}{N_{T=t}}\sum_{i=0}^{N_{T=t}}\hat{\beta_i} Y_i$. Since $g(X) = Y\beta$ is a function of i.i.d samples $X=(W,T, Y)$, we can apply the McDiarmid inequality,
$$\Pr[g(X_n) - \mathbb{E}(g(X_n)) \leq t] \geq 1 - exp(-\frac{2t^2}{\sum_i^n c_i^2})$$
where $c_i$ is the maximum change in $g(X_n)$ after replacing $X_i$ with another value $X_i'$. We compute a data-dependent bound for each $c_i$ by considering all possible discrete values for $X'_i$ and computing the resultant difference in $g$. We provide the code to estimate $c_i$ in \url{github/anonymizedcode}. 

Fixing the RHS as $p$, we obtain $t = \sqrt{\frac{\sum_i^n c_i^2 \log\frac{1}{1-p}}{2}}$. Thus, we can estimate the difference $\textit{VR}_{T=1}$ as 
\begin{align}
    \texttt{With} \Pr=p & &    | \alpha_{T=1} - \hat{\alpha}_{T=1} | \leq \sqrt{\frac{\sum_i^n c_i^2 \log\frac{1}{1-p}}{2}}
\end{align}

\paragraph{Estimating WLD}
For some estimators like IPW, we can prove that they are unbiased and hence $WLD_{T=t}=0$. 
\begin{lemma}
    For the IPW estimator, if $P(W)=Q(W)=R(W), L(h^\infty(x_R),h^\infty(x_P))=0$.
\end{lemma}
Proof is in Supplementary Materials. For others, our estimation depends on assuming that $Q(T=1|W)$ is bounded between $[\rho, 1-\rho]$ for some sufficiently small $\rho$. The intuition is that assignment of $T$ depends on $W$, but for every $W=w$ there is a minimum probability that $T=1$ or $T=0$. This assumption can be stated as ``no extreme selection based on W'' and is a generalization of the \emph{overlap} assumption~\cite{shalit2017estimating}, a requirement for IPW and other causal inference methods. Under this assumption, WLD can be written as:
\begin{align}
    WLD_{T=t}(R,P) = \sum_W ( R(W\vert T=t)-P(W\vert T=t) ) E_{Q}[Y\vert T=t,W] \\
    =\sum_W \sum_Y (R(W\vert T=t)-P(W)) Y Q(Y\vert T=t,W) \\
=\sum_W \sum_Y Y R(W\vert T=t) Q(Y\vert T=t,W) -\sum_W \sum_Y YQ(W)  Q(Y\vert T=t,W) \\
=\sum_W \sum_Y Y\beta Q(Y, W\vert T=t) -\sum_W \sum_Y Y\beta^*Q(W\vert T=t)  Q(Y\vert T=t,W)  \label{eqn:wld-est-rhs}
\end{align}
where the third equality is due to $P(W|T=t)=P(W)=Q(W)$ and the fourth due to the definition of $\beta$ from above. Here $\beta$ corresponds to a causal inference method given by the representation R and $\beta^*$ corresponds to unbiased IPW weights, estimated by using IPW and then   clipping propensity scores as $\min(\rho, \hat{Q}(T=1|W))$ (assuming bounded $Q(T=1|W)$).    The first term of Equation~\ref{eqn:wld-est-rhs} can be estimated as as $\mathbb{E}_Q (Y\beta|T=1)$ and the second term as $\mathbb{E}_Q(Y\beta^*|T=1)$. 
We show applications of estimating these bounds in Section~\ref{sec:eval}.

\section{Bounds for ACE with unobserved confounders}\label{sec:robust}
We now provide bounds for the general case of causal inference in the presence of unobserved confounders. 
Let $V=\{W,T,Y,U\}$,  where $W,T,Y$ are the same as before, but $U$ is introduced. 

Our insight is that principles of robust statistics can be used to bound the loss due confounding by $U$. 
Let us consider the example from Section \ref{introduction} where $U$ are unobserved genes that affect the outcome $Y$ as well as the choice of treatment $T$.
In many cases, it can be reasonable to assume that $U$ will affect the outcome $Y$ for only a subset of the population (especially so when the outcome has discrete levels). 
Specifically, we make an assumption that $U$ does not change the outcomes for all the units, instead only for a fraction of units $\eta$.  This assumption can be written in terms of the Huber contamination model~\cite{huber1992robust}, where $U$'s effect is the \emph{contamination} in observed $Y$.  
Formally, we can write,
$$Y \sim (1-\eta)Q(Y\vert T,W)+\eta Q(Y\vert T,W,U)$$
where $\eta$ is the contaminated fraction of samples.
Further, we assume $U$ to be adversarial in nature as described in \cite{lai2016agnostic}, i.e. $U$ is allowed to observe values of $W,T$ and change the value of $Y$ accordingly. 

Under these settings, we show that it is possible to bound $L(h(x_R), h^\infty(x_P))$ by estimating $E_W E_Y[Y\vert T,W]$ robustly and plugging in the additional error due to contamination. 
In effect, this amounts to a two-step procedure: learn a new representation $Q_B$ robustly from distribution $Q$ and then learn $\beta(W,T)$ on this representation $Q_B$ (i.e., weight $Q_B$ to get $R$). In practice, since the bounds from Section~\ref{sec:bounds} only depend on $E[Y|T,W]$, we do not need to estimate $Q_B$ but rather just a robust estimate of the conditional means for $Y|T,W$.
Estimating $E_{Q_B}[Y|T,W]$ with a robust estimator implies removing the backdoor path as in Figure~\ref{fig:target1} and thus, the error of the estimate can be bounded given a contamination fraction $\eta$. The proof proceeds in an analogous way to the previous Section; we next show an application of the bound by estimating error for the IPW estimator. 

\subsection{Bounds for IPW under unobserved confounding}
\label{sec:ipw-bounds}
Recall from Theorem \ref{thm:IPW}, we have $\beta(W,T)=\dfrac{Q(W)}{Q(W\vert T=1)}$ for IPW. To provide a concrete bound, we use the robust mean estimator from Lai et al.~\cite{lai2016agnostic} for $Y$ 
and assume that the fourth moment of $Y$ is bounded, $E((Y-\mu)^4|T,W)\leq C\sigma^4$ where  $\sigma$ is the standard deviation and $C$ is some constant.
We assume $\eta$ fraction contamination (confounding due to $U$) and $\epsilon$ is a parameter for the running time of the robust mean algorithm.

\begin{note}
Define $\gamma \triangleq \sum_W Q(W_i)O(C^1/4(\eta+\epsilon)^3/4\sigma)$
\end{note}

\begin{theorem}
Assume that the loss function $L$ is symmetric and obeys the triangle inequality. $h$ is a function on a representation $R$ such that $h(x_R)=E[Y\vert T=1]-E[Y\vert T=0]$. Then, for any valid weighted representation $R$,if $U\neq \phi$, the following holds with probability $(1-1/poly(n))^{2\vert W\vert}$.

$L(h(x_R),h^\infty(x_P))\leq WLD_{T=1}^W(Q_B,P) + WLD_{T=0}^W(Q_B,P) + \vert\alpha_{T=1}(\hat{Q_B},\hat{\beta_B})- \alpha_{T=1}(Q,\beta_B)-\gamma\vert+\vert\alpha_{T=0}(\hat{Q_B},\hat{\beta_B})-\alpha_{T=0}(Q_B,\beta_B)-\gamma\vert$
\end{theorem}
where $Q_B$ is the ``robust'' version of the distribution $Q$. 
The proof is in Supplementary Materials.
\begin{corollary}
For IPW estimator, if $P(W)=Q(W)=R(W), L(h^\infty(x_R),h^\infty(x_P))=2\gamma$
\end{corollary}
The proof is in Supplementary Materials. Note that depending on the nature of corruption and the adversary model, different robust estimation methods can be used which may provide tighter bounds.

\section{Evaluation: Applying the loss bounds}\label{sec:eval}
We now evaluate our  bounds on simulated data and describe their utility for choosing hyperparameters for causal inference. When there are unobserved confounders, we also propose a new method, \emph{robust IPW} that relies on a robust estimator.  
\paragraph{When $U$ is ignorable ($\tau=\kappa=0$).}
We generate data using the following structural equations:
\begin{align*}
    w_j \sim Binomial(p=0.5) \forall j \in [1,|W|]; \ \ \ \ \   u \sim Normal(\mu,\sigma)       \\
    t = Bernoulli(p=sigmoid(\psi \cdot w +\kappa u));     y = Bernoulli(p=sigmoid(\nu \cdot w + \tau u + \lambda t)) 
\end{align*}
where $\psi, \nu \in \mathbb{R}^M$ and $\lambda$, $\tau$, $\kappa$ are scalar. $T$ is always binary. We chose this formulation since generation of $T$ maps directly to logistic regression, which makes it easy to estimate propensity scores when computing the causal estimate. The true ATE can be obtained by simulating $y_{counterfactual}$ by setting $t=1-t$ in the equation for $y$ above and computing the average difference. We present results for $|W|=5$.

In Figure~\ref{fig:l1bounds}, we  show that the bounds correctly follow the IPW estimate over different levels of confounding by W (values of $\psi$), and different sample sizes. Since IPW is unbiased, the bounds effectively estimate the variance of the estimator: as $\psi$ increases, the error bound is expected to increase. The empirical error is the L1 distance between the actual IPW estimate and the true ATE. Across sample sizes and different values of $\psi$, we find that the proposed bound tracks the empirical error in the IPW estimate (Figure~\ref{fig:l1bounds}).
\begin{figure}
    \includegraphics[scale=0.45]{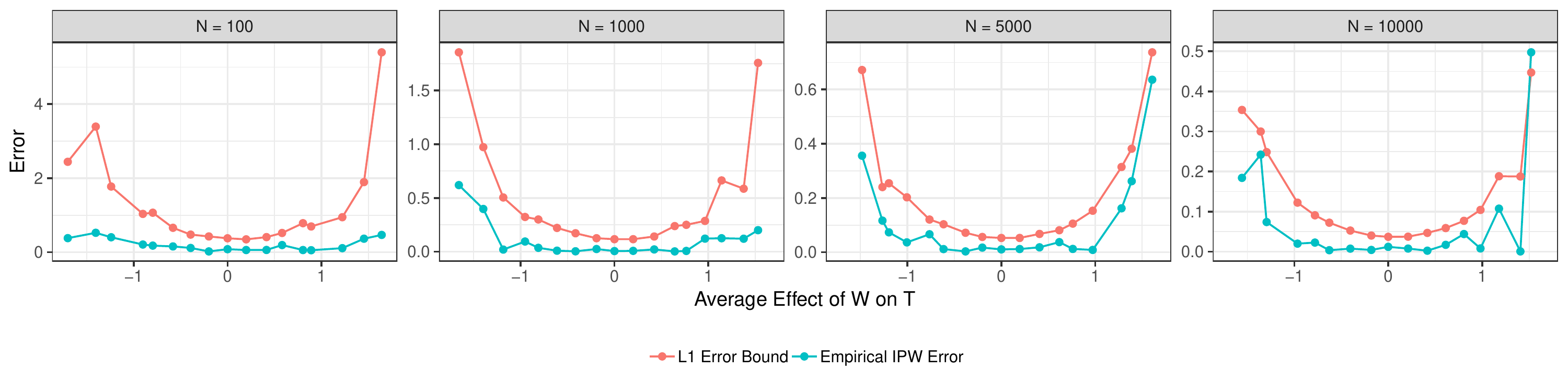}
    \caption{L1-error bound and IPW estimate for different levels of confounding by W. }
    \label{fig:l1bounds}
\end{figure}
\begin{figure}
    \includegraphics[scale=0.45]{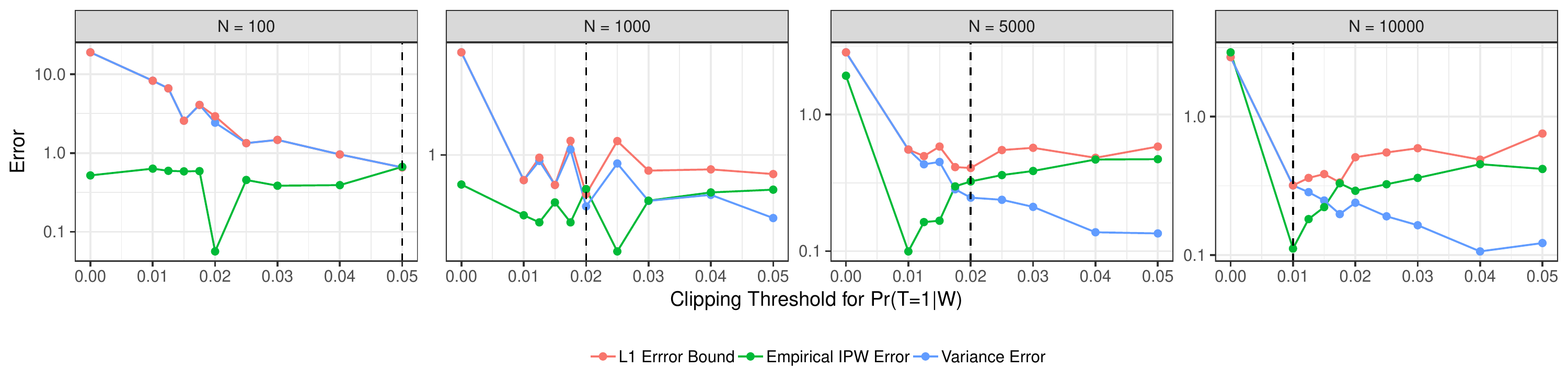}
    \caption{Choosing the clipping threshold for IPW propensity that minimizes L1-error bound.}
    \label{fig:clip-ipw}
\end{figure}

These bounds can have practical significance in choosing hyperparameters in causal inference methods. For instance, consider the popular technique of clipping extremely high propensity scores~\cite{lee2011weight} to reduce IPW variance. This introduces bias in the estimator, and an important question is how to select the clipping threshold. We estimate the loss bound for IPW under different values of the threshold (Figure~\ref{fig:clip-ipw}). To estimate the WLD term, we generate treatment $T$ such that the true probability is bounded between $[\rho,1-\rho]$ where $\rho=0.01$ as discussed in Section~\ref{sec:bounds}. We find that the optimal (one that minimizes the loss bound) clipping threshold varies with sample size, marked by the dotted vertical line. Optimal threshold decreases with sample size: a higher threshold reduces the variance in smaller sample sizes, which is less required as sample size increases.

\paragraph{When U is not ignorable.}
Finally we consider the setting when there are unobserved confounds. Based on the bounds, we propose a robust version of IPW using the estimator from \cite{lai2016agnostic} and evaluate for a continuous $Y$. For a fixed contamination ($\eta$), our proposed robust IPW recovers the true estimate up to an error, and its error increases as $\eta$ is increased. Critically, the variance in the estimator is substantially lower than the standard IPW estimator, but it is biased.  At $\eta=0.05$, for a true causal effect of 1, we obtain an error of $\approx 0.2$. Details are in the Supplementary Materials.

\section{Related Work}
\label{relatedwork}
Our work is related to domain adaptation and representation learning for causal inference. In the domain adaptation problem, the goal is to learn a function that generalizes from a source distribution $Q$ to a target distribution $P$. Mansour et al.~\citep{mansour2009domain} provided bounds for generalization of a function between distributions and proposed weighting as a technique to minimize distance between source and target distributions. Gretton et al. \cite{gretton2009covariate} and Kallus et al.~\cite{kallus2016generalized} have also proposed methods to learn weights from data samples so that the distance between the weighted source and the target is reduced. 
Weighting of the source distribution can be considered as learning a representation. Based on this idea, Johansson et al. \citep{johansson2016learning} proposed a domain adaptation framework to learn a counterfactual distribution from the factual distribution. The estimated counterfactual distribution is then used to evaluate causal effect conditional on specific covariates, also known as the conditional average treatment effect (CATE). 

For estimating ACE, there is a rich literature in statistics that proposes estimators based on the backdoor formula, including stratification, matching and propensity score-based methods like IPW \citep{shah2005propensity}. In the absence of latent confounders, error in estimating causal effect has been well studied for estimators like IPW \cite{rosenbaum1983central} . For instance, estimators like Horvitz Thompson and Hajeck estimators \cite{henderson2013estimating} provide us with a unbiased variance estimate for IPW. 
However, all of the above methods for CATE and ACE do not focus on producing general error bounds and  assume that $U$ is ignorable.

\section{Conclusion}
We have provided general error bounds for any causal estimator that can be written as a weighted representation learner. The error naturally decomposes into  the sampling error in estimating $R$ and measure of distance between the weighted distribution and target distribution $P$. The error terms also yield important insights for developing new methods by minimizing the error bounds.

\medskip

\small
\bibliographystyle{acm}
\bibliography{ref.bib}

\section{Supplementary Materials}
\appendix
\section{Proof of Theorem 4.4}
\begin{defn}(Weighted L1 Distance)
    Assume $R$ and $P$ are distributions over $W,T,Y$. We define the weighted L1 distance (WLD), between $R,P$ as follows:
    \begin{equation}\label{eq:WLD}
        WLD_{T=t}(R,P)=\sum_W (R(W\vert T=t)-P(W\vert T=t)) E_{Q}[Y\vert T=t,W]
    \end{equation}
\end{defn}
And similarly, we define a VR term due to variance  in estimation.$VR_{T=t}= \alpha_{T=t}(\hat{Q}, \hat{\beta}) - \alpha_{T=t}(Q,\beta)$.
\begin{defn}\label{def:alpha}(Sample Error Terms)
Define 
\begin{equation}
    \alpha_{T=t}(\hat{Q},\hat{\beta}) \triangleq \sum_W \hat{\beta}\hat{Q}(W\vert T=t)\sum_Y Y\hat{R}(Y\vert T=t,W)
\end{equation}
    
Using the same notation, population $\alpha$ is defined as
\begin{equation}
    \alpha_{T=t}(Q,\beta) \triangleq \sum_W \beta Q(W\vert T=t)\sum_Y YR(Y\vert T=t,W)    
\end{equation}

\end{defn}
\begin{defn}\label{def:beta}
    Let $Q(W,T,Y)$ be the source distribution. We define a weighting function $\beta(W,T)$ to generate a representation $R$ such that,
    $$\beta(W,T) = \frac{R(W\vert T)}{Q(W\vert T)}= \frac{R(T\vert W)}{Q(T\vert W)}$$ and that $R$ is a valid probability distribution, $\forall W,T\hspace{1ex}R(W,T)\geq 0$, $\sum_W \sum _T R(W,T)=1$. 
\end{defn}
\begin{note}\label{Causal Invariance}
The causal mechanism does not change in the distributions $P,Q,R$, which means, $P(Y\vert T,W)=Q(Y\vert T,W)=R(Y \vert T,W)$
\end{note}
For ease of exposition, we'll assume the loss function is L1. We have the following result. 

\begin{theorem}
Assume that the loss function $L$ is symmetric and obeys the triangle inequality. $h$ is a function on a representation $R$ such that $h(x_R)=E_R[Y\vert T=1]-E_R[Y\vert T=0]$. Then, for any valid weighted representation $R$, if $U= \phi$ and $L=L1$, the following holds
\begin{align*}
    L(h(x_R),h^\infty(x_P))\leq & \vert\alpha_{T=1}(\hat{Q},\hat{\beta})- \alpha_{T=1}(Q,\beta)\vert+\vert\alpha_{T=0}(\hat{Q},\hat{\beta})-\alpha_{T=0}(Q,\beta)\vert \\
     & +\vert WLD_{T=1}(\beta Q,P) \vert + \vert WLD_{T=0}(\beta Q,P)\vert
\end{align*}

\end{theorem}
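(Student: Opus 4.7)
The plan is to reduce the loss between $h(x_R)$ and $h^\infty(x_P)$ to the four quantities listed on the right-hand side by two successive applications of the triangle inequality, once to split the $T=1$ and $T=0$ components of the ACE, and once to interpolate via the population version $\alpha_{T=t}(Q,\beta)$.

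First, since $h(x_R) - h^\infty(x_P) = \bigl(\hat{E}_R[Y|T=1] - E_P[Y|T=1]\bigr) - \bigl(\hat{E}_R[Y|T=0] - E_P[Y|T=0]\bigr)$, the triangle inequality (plus the assumed symmetry of $L$) yields
\begin{equation*}
L(h(x_R),h^\infty(x_P)) \;\leq\; \bigl|\hat{E}_R[Y|T=1] - E_P[Y|T=1]\bigr| + \bigl|\hat{E}_R[Y|T=0] - E_P[Y|T=0]\bigr|.
\end{equation*}
By the definition of $h$ and $\alpha$, $\hat{E}_R[Y|T=t]$ equals $\alpha_{T=t}(\hat{Q},\hat{\beta})$ after expanding the conditional expectation as $\sum_W \hat{R}(W|T=t) \sum_Y Y\hat{R}(Y|T=t,W)$ and writing $\hat{R}(W|T=t) = \hat{\beta}\hat{Q}(W|T=t)$.

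Next, for each $t \in \{0,1\}$, I would insert the population quantity $\alpha_{T=t}(Q,\beta)$ as a pivot:
\begin{equation*}
\bigl|\alpha_{T=t}(\hat{Q},\hat{\beta}) - E_P[Y|T=t]\bigr| \;\leq\; \bigl|\alpha_{T=t}(\hat{Q},\hat{\beta}) - \alpha_{T=t}(Q,\beta)\bigr| + \bigl|\alpha_{T=t}(Q,\beta) - E_P[Y|T=t]\bigr|.
\end{equation*}
The first summand on the right is exactly the $VR$-style sample error term that already appears in the bound. The main work is to identify the second summand with the WLD term.

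To do this, I would expand $\alpha_{T=t}(Q,\beta) = \sum_W \beta Q(W|T=t) \sum_Y Y R(Y|T=t,W)$ using Definition of $\beta$ to replace $\beta Q(W|T=t)$ with $R(W|T=t)$, and then invoke the causal-invariance note to replace $R(Y|T=t,W)$ with $Q(Y|T=t,W)$, obtaining $\sum_W R(W|T=t) E_Q[Y|T=t,W]$. Applying the same law of total expectation to $E_P[Y|T=t] = \sum_W P(W|T=t) E_P[Y|T=t,W]$ and again using causal invariance to rewrite $E_P[Y|T=t,W] = E_Q[Y|T=t,W]$, the difference becomes $\sum_W (R(W|T=t) - P(W|T=t)) E_Q[Y|T=t,W]$, which is precisely $WLD_{T=t}(R,P) = WLD_{T=t}(\beta Q, P)$. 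Assembling the two pivots across $t\in\{0,1\}$ produces the claimed bound.

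The main obstacle is the identification of $\alpha_{T=t}(Q,\beta) - E_P[Y|T=t]$ with the WLD term: it is the only step that genuinely uses the causal assumptions (the ignorability-based invariance of $P(Y|T,W) = Q(Y|T,W) = R(Y|T,W)$ and the definitional relation between $\beta$, $Q$, and $R$), as opposed to purely algebraic manipulations or generic triangle inequalities. Care is needed to keep track of which distribution's conditional expectation appears at each stage, since the weighting $\beta$ is defined in two equivalent forms (via $W|T$ and $T|W$) and one must select the $W|T$ form to match the decomposition of $\alpha$. Once that identification is in place, the rest of the argument is a straightforward assembly and requires no further assumptions beyond symmetry and the triangle inequality for $L$.
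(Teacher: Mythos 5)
Your proposal is correct and follows essentially the same route as the paper's proof: the paper pivots through $h^\infty(x_R)$ first and then splits the $T=1$ and $T=0$ components, whereas you split by $t$ first and then pivot through $\alpha_{T=t}(Q,\beta)=E_R[Y\vert T=t]$, but the two triangle-inequality steps commute and yield the identical four-term bound. The key identifications --- the WLD term via causal invariance $P(Y\vert T,W)=Q(Y\vert T,W)=R(Y\vert T,W)$ and the $\beta$ definition, and the sample-error term as a difference of $\alpha$'s --- match the paper's Parts I and II exactly.
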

\begin{proof}

By Triangle Inequality,
\begin{equation}
\label{eq:TriangleInequality}
    L(h(x_R),h^\infty(x_P))\leq L(h(x_R),h^\infty(x_R))+L(h^\infty(x_R),h^\infty(x_P))
\end{equation}
\paragraph{PART I.}
Consider the second term in the RHS, $L(h^\infty(x_R),h^\infty(x_P))$
\begin{align*}
    &=L((E_R[Y\vert T=1]-E_R[Y\vert T=0])-(E_P[Y\vert T=1]-E_P[Y\vert T=0]))\\
    &=L((E_R[Y\vert T=1]-E_P[Y\vert T=1])+(E_P[Y\vert T=0]-E_R[Y\vert T=0]))\\
\end{align*}
Expanding on the first term,  $(E_R[Y\vert T=1]-E_P[Y\vert T=1])$
\begin{align*}
 \label{eq:WLD}
    &=\sum_Y Y(R(Y\vert T=1)-P(Y\vert T=1))\\
    &=\sum_Y Y{\sum_W R(Y\vert T=1,W)R(W\vert T=1)-P(Y\vert T=1,W)P(W\vert T=1)}\\
    &=\sum_W\sum_Y YR(Y\vert T=1,W)(R(W\vert T=1))-(P(W\vert T=1))\ldots \text{By}\ref{Causal Invariance}\\
    &=\sum_W(R(W\vert T=1)-P(W\vert T=1))\sum_Y YR(Y\vert T=1,W)\ldots\text{By}\ref{Causal Invariance}\\
    &=\sum_W(R(W\vert T=1)-P(W\vert T=1))\sum_Y YQ(Y\vert T=1,W)\ldots\text{By}\ref{Causal Invariance}
    \end{align*}
    Now using the definition of Expectation and $\beta(W,T)$ (Definition~\ref{def:beta})
\begin{align*}
    &=\sum_W(R(W\vert T=1)-P(W\vert T=1))E_{Q}[Y\vert T=1,W]\\
    &=\sum_W(\beta(W,T=1)Q(W\vert T=1)-P(W\vert T=1))E_{Q}[Y\vert T=1,W]\\
    &=WLD_{T=1}(\beta Q,P)
\end{align*}

Similarly expanding$(E_R[Y\vert T=0]-E_P[Y\vert T=0])$ (by symmetry)
\begin{align*}
&\sum_W(\beta(W,T=0)Q(W\vert T=0)-P(W\vert T=0))E_{Q}^Y[Y\vert T=0,W]\\
&= WLD_{T=0}(\beta Q,P)
\end{align*}

\paragraph{PART II} 
Now let us consider the first part of the RHS of Equation~\ref{eq:TriangleInequality}.

$L(h(x_R),h^\infty(x_R))$
\begin{align*}
    &=L((\hat{E_R}[Y\vert T=1]-\hat{E_R}[Y\vert T=0])-(E_R[Y\vert T=1]-E_R[Y\vert T=0]))\\
    &=L((\hat{E_R}[Y\vert T=1]-E_R[Y\vert T=1])+(E_R[Y\vert T=0]-\hat{E}_R[Y\vert T=0]))\\
\end{align*}
Solving for $(\hat{E_R}[Y\vert T=1]-E_R[Y\vert T=1])$
\begin{align*}
    &=\sum_Y Y(\hat{R}(Y\vert T=1)-R(Y\vert T=1))\\
    &=\sum_Y{\sum_W Y\hat{R}(Y\vert T=1,W)\hat{R}(W\vert T=1)-YR(Y\vert T=1,W)R(W\vert T=1)}\\
    &=\sum_W\sum_Y Y\hat{R}(Y\vert T=1,W)\hat{R}(W\vert T=1)-\sum_W{\sum_Y YR(Y\vert T=1,W)R(W\vert T=1)}\\
    &=\sum_W \hat{R}(W\vert T=1)\sum_Y Y\hat{R}(Y\vert T=1,W)-\sum_W R(W\vert T=1)\sum_Y YR(Y\vert T=1,W)\\
    &=\alpha_{T=1}(\hat{Q},\hat{\beta})-\alpha_{T=1}(Q,\beta)
\end{align*}
where the last equality follows from Definition~\ref{def:alpha}.

Similarly expanding $\hat{E}_R[Y\vert T=0]-E_R[Y\vert T=0]$ (by symmetry)

\hspace{4ex}$=\sum_W \hat{R}(W\vert T=0)\sum_Y Y\hat{R}(Y\vert T=0,W)-\sum_W R(W\vert T=0)\sum_Y YR(Y\vert T=0,W)\\$

\hspace{4ex}$=\alpha_{T=0}(\hat{Q},\hat{\beta})-\alpha_{T=0}(Q,\beta)$

\paragraph{PART III}Finally, we derive the result assuming Loss Function is L1.
\begin{align*}
    L(h(x_R),h^\infty(x_R))&=L((\hat{E_R}[Y\vert T=1]-E_R[Y\vert T=1])+(E_R[Y\vert T=0]-\hat{E}_R[Y\vert T=0]))\\
    &=\vert(\hat{E_R}[Y\vert T=1]-E_R[Y\vert T=1])+(E_R[Y\vert T=0]-\hat{E}_R[Y\vert T=0])\vert \\
    &\leq \vert\hat{E_R}[Y\vert T=1]-E_R[Y\vert T=1]\vert+\vert E_R[Y\vert T=0]-\hat{E}_R[Y\vert T=0]\vert \\
    &\leq \vert\alpha_{T=1}(\hat{Q},\hat{\beta})- \alpha_{T=1}(Q,\beta)\vert+\vert\alpha_{T=0}(\hat{Q},\hat{\beta})-\alpha_{T=0}(Q,\beta)\vert \\
\end{align*}
\begin{align*}
    L(h^\infty(x_R),h^\infty(x_P))&=L((E_R[Y\vert T=1]-E_P[Y\vert T=1])+(E_P[Y\vert T=0]-E_R[Y\vert T=0]))\\
    &=\vert(E_R[Y\vert T=1]-E_P[Y\vert T=1])+(E_P[Y\vert T=0]-E_R[Y\vert T=0])\vert \\
    &\leq\vert E_R[Y\vert T=1]-E_P[Y\vert T=1]\vert+\vert E_P[Y\vert T=0]-E_R[Y\vert T=0]\vert\\
    &\leq \vert WLD_{T=1}(\beta Q,P)\vert +\vert WLD_{T=0}(\beta Q,P)\vert
\end{align*}

Hence, we obtain the result:
\begin{align*}
L(h(x_R),h^\infty(x_P))\leq & \vert\alpha_{T=1}(\hat{Q},\hat{\beta})- \alpha_{T=1}(Q,\beta)\vert+\vert\alpha_{T=0}(\hat{Q},\hat{\beta})-\alpha_{T=0}(Q,\beta)\vert \\
& +\vert WLD_{T=1}(\beta Q,P)\vert + \vert WLD_{T=0}(\beta Q,P)\vert
\end{align*}
\end{proof}

\section{Proof of Lemma 4.5}
\begin{lemma}
    For the IPW estimator, if $P(W)=Q(W)=R(W), L(h^\infty(x_R),h^\infty(x_P))=0$.
\end{lemma}
\begin{proof}
For the sake of proof, we'll assume the loss function is L1. 
Since,
\begin{align*}
    L(h_R^\infty,h_P^\infty\vert T=1)&=L((E_R[Y\vert T=1]-E_P[Y\vert T=1])\\
    &=\vert E_R\left[Y\vert T=1\right] -E_P\left[Y\vert T=0\right]\\
    &\leq \sum_w \vert R(W\vert T=1)E_R\left[Y\vert T=1,W\right]-P(W\vert T=1)E_P
    \left[Y\vert T=1,W\right]\vert\\
    &= \sum_w \vert\beta_{ipw}(Q,W) Q(W\vert T=1)E_{Q}\left[Y\vert T=1,W\right]-P(W\vert T=1)E_P\left[Y\vert T=1,W\right]\vert\\
    &\leq \sum_w \vert Q(W)\left(E_P\left[Y\vert T=1,W\right]+\right)-P(W\vert T=1)E_P\left[Y\vert T=1,W\right]\vert\\
    &= \sum_w \vert \big(Q(W)-P(W)\big)E_P\left[Y\vert T=1,W\right]\vert\\
    &=0
\end{align*}
A similar argument can be made for $T=0$ and hence, $L(h^\infty(x_R),L^\infty(x_P))=0$
\end{proof}
\section{Proof of Theorem 5.2}
\begin{theorem}
Assume that the loss function $L$ is symmetric and obeys the triangle inequality. $h$ is a function on a representation $R$ such that $h(x_R)=E[Y\vert T=1]-E[Y\vert T=0]$. Then, for any valid weighted representation $R$,if $U\neq \phi$, the following holds with probability $(1-1/poly(n))^{2\vert W\vert}$.

$L(h(x_R),h^\infty(x_P))\leq WLD_{T=1}^W(Q_B,P) + WLD_{T=0}^W(Q_B,P) + \vert\alpha_{T=1}(\hat{Q_B},\hat{\beta_B})- \alpha_{T=1}(Q,\beta_B)-\gamma\vert+\vert\alpha_{T=0}(\hat{Q_B},\hat{\beta_B})-\alpha_{T=0}(Q_B,\beta_B)-\gamma\vert$
\end{theorem}
Where, $Q_B$ represents the `robust' version of the distribution `Q'. 
\begin{proof}
For the sake of proof, we'll assume the loss function is a L1. 
For IPW, $\beta_B(Q,W)=\dfrac{Q_B(W)}{Q_B(W\vert T=t)}$ and $R(W\vert T=t)=Q_B(W)$
\begin{align*}
    L(h_R^\infty,h_P^\infty)&\leq WLD_{T=1}^W(Q_B,P) + WLD_{T=0}^W(Q_B,P)\ldots From \ref{eq:WLD}
\end{align*}
From \cite{lai2016agnostic}, we know with probability $1-1/poly(n)$, $\vert\hat{\mu}-\mu\vert\leq O(C_4^1/4(\eta+\epsilon)^3/4\sigma)$.
Consider, $\sum_Y YQ_B(Y\vert T=1,W=w_i)$. This is evaluating conditional mean of $Y$ robustly for a fixed value of $w_i$.

For a fixed value of $W=w_i, T=t$, with probability $1-1/poly(n)$
\begin{equation*}
    \bar{Y}\leq \hat{\bar{Y}}+O(C_4^1/4(\eta+\epsilon)^3/4\sigma)
\end{equation*}
Since, $Q_B(w_i)>0$,
\begin{equation*}
    Q_B(w_i)\bar{Y}\leq Q_B(w_i)\{\hat{\bar{Y}}+O(C_4^1/4(\eta+\epsilon)^3/4\sigma)\}
\end{equation*}
Since, all instances of conditional mean of $Y$ are independent,
$\sum_W Q_B(w_i)\bar{Y}\leq \sum_W Q_B(w_i)\{\hat{\bar{Y}}+O(C_4^1/4(\eta+\epsilon)^3/4\sigma)\}$ with probability at least, $(1-1/poly(n))^{\vert W\vert}$

$L(h_R,h_R^\infty\vert T=t)\leq \sum_W (\hat{\bar{Y}}\vert W=w_i,T=t)(Q_B(w_i)-\hat{Q}_B(w_i))-\sum_W Q(W_i)O(C_4^1/4(\eta+\epsilon)^3/4\sigma)$

$L(h_R,h^\infty_R)\leq \vert h(x_R)-h^\infty_R(x_R)\vert T=1+\vert h(x_R),h^\infty(x_R)\vert T=0)\vert$\\
$L(h_R,h^\infty_R)\leq \vert\alpha_{T=1}(\hat{Q_B},\hat{\beta_B})- \alpha_{T=1}(Q_B,\beta_R)-\gamma\vert+\vert\alpha_{T=0}(\hat{Q_B},\hat{\beta_B})-\alpha_{T=0}(Q_B,\beta_B)-\gamma\vert$ 
\end{proof}

\section{Proof of Corollary 5.2.1}
\begin{corollary}
For IPW estimator, if $P(W)=Q(W)=R(W), L(h^\infty(x_R),h^\infty(x_P))=2\gamma$
\end{corollary}
\begin{proof}
For the sake of proof, we'll assume the loss function is L1. 
Since,
\begin{align*}
    L(h_R^\infty,h_P^\infty\vert T=1)&=L((E_R[Y\vert T=1]-E_P[Y\vert T=1])\\
    &=\vert E_R\left[Y\vert T=1\right] -E_P\left[Y\vert T=0\right]\\
    &\leq \sum_w \vert R(W\vert T=1)E_R\left[Y\vert T=1,W\right]-P(W\vert T=1)E_P\left[Y\vert T=1,W\right]\vert\\
    &= \sum_w \vert\beta_{ipw}(Q_B,W) Q_B(W\vert T=1)E_{Q_B}\left[Y\vert T=1,W\right]-P(W\vert T=1)E_P\left[Y\vert T=1,W\right]\vert\\
    &\leq \sum_w \vert Q_B(W)\left(E_P\left[Y\vert T=1,W\right]+\gamma\right)-P(W\vert T=1)E_P\left[Y\vert T=1,W\right]\vert\\
    &= \sum_w \vert\gamma Q_B(W)+\big(Q_B(W)-P(W)\big)E_P\left[Y\vert T=1,W\right]\vert\\
    &=\sum_w\vert\gamma Q_B(W)\vert\\
    &=\gamma
\end{align*}
A similar argument can be made for $T=0$ and hence, $L(h^\infty(x_R),L^\infty(x_P))=2\gamma$
\end{proof}

\section{Results of Robust IPW}

\paragraph{Setup.} We generate data using the following structural equations (assuming both W and U are unidimensional), and present results for the following set of parameters:
\begin{align}
    n=10000,\alpha=0.0,\beta=0.01,\nu=0.3, \gamma=1.0,\delta=10.0\\
    noise_y\sim Normal(0,0.5)
    w \sim Binomial(p=0.7)  \\
    u \sim Normal(\mu=5.0,\sigma=1.0)       \\
    t = Bernoulli(p=sigmoid(\alpha\cdot w+\beta\cdot u)))      \\
    y = \nu\cdot w + \gamma t+\delta \cdot u + noise_y \label{y-eqn}
\end{align}
We simulate the Huber contamination due to $U$ as follows: with probability $\eta$, $\delta=\delta$, and with probability $1-\eta$, $\delta=0$.

Since $\gamma=1$, the true ACE (Average Causal Effect) is 1.0. The following table shows the robust IPW and standard IPW estimates over 10 different runs. 

\begin{tabular} { |l | c | c | }
 \hline
$\eta$ & Robust IPW (min,max) & Standard IPW(min,max) \\
 \hline
 0.0  & (0.979,1.011)  &  (0.979,1.011)  \\
 0.05  & (0.884,0.914)  & (0.860,1.377)  \\
 0.1  & (0.793,0.831)  & (0.820,1.737)  \\
 0.15  & (0.716,0.753)  & (0.709,1.887)  \\
 0.20  & (0.646,0.692)  & (0.227,1.640)  \\
\hline
\end{tabular}

\end{document}